\def\eqref#1{equation~\ref{#1}}
\def\1{\bm{1}}
\def\vtheta{{\bm{\theta}}}
\def\va{{\bm{a}}}
\def\vk{{\bm{k}}}
\def\vp{{\bm{p}}}
\def\vr{{\bm{r}}}
\def\vs{{\bm{s}}}
\def\vx{{\bm{x}}}
\def\vz{{\bm{z}}}
\def\mC{{\bm{C}}}
\def\mI{{\bm{I}}}
\def\mK{{\bm{K}}}
\def\mO{{\bm{O}}}
\def\mQ{{\bm{Q}}}
\def\mV{{\bm{V}}}
\def\mW{{\bm{W}}}
\DeclareMathAlphabet{\mathsfit}{\encodingdefault}{\sfdefault}{m}{sl}
\SetMathAlphabet{\mathsfit}{bold}{\encodingdefault}{\sfdefault}{bx}{n}
\crefname{figure}{Figure}{Figures}
\crefname{table}{Table}{Tables}
\theoremstyle{plain}
\newtheorem{theorem}{Theorem}[section]
\theoremstyle{definition}
\theoremstyle{remark}
\newcommand{\gtcoord}{\cellcolor[HTML]{ffffff}}
\newcommand{\eqcoord}{\cellcolor[HTML]{ffffff}}
\newcommand{\first}[1]{\textbf{#1}}
\newcommand{\themodel}{MolSpectra\xspace}
\newcommand{\stoptocwriting}{
  \addtocontents{toc}{\protect\setcounter{tocdepth}{-5}}}
\newcommand{\resumetocwriting}{
  \addtocontents{toc}{\protect\setcounter{tocdepth}{\arabic{tocdepth}}}}
\definecolor{myblue}{RGB}{62, 104, 183}
\title{MolSpectra: Pre-training 3D Molecular Representation with Multi-modal Energy Spectra}
\author{Liang Wang\textsuperscript{\textmd{1,2}}\thanks{Correspondence to Liang Wang: \texttt{liang.wang@cripac.ia.ac.cn}}~
    Shaozhen Liu\textsuperscript{\textmd{1}}
    Yu Rong\textsuperscript{\textmd{3}}\thanks{Corresponding authors: Yu Rong and Qiang Liu}~
    Deli Zhao\textsuperscript{\textmd{3}}
    Qiang Liu\textsuperscript{\textmd{1,2}}\footnotemark[\value{footnote}]~
    Shu Wu\textsuperscript{\textmd{1,2}}
    Liang Wang\textsuperscript{\textmd{1,2}}\\
\textsuperscript{\textmd{1}}New Laboratory of Pattern Recognition (NLPR),\\ 
\, State Key Laboratory of Multimodal Artificial Intelligence Systems (MAIS),\\
\, Institute of Automation, Chinese Academy of Sciences (CASIA)\\
\textsuperscript{\textmd{2}}School of Artificial Intelligence, University of Chinese Academy of Sciences \\
\textsuperscript{\textmd{3}}DAMO Academy, Alibaba Group\\
}
\begin{document}

\pagenumbering{arabic}

\maketitle

\stoptocwriting
\begin{abstract}
Establishing the relationship between 3D structures and the energy states of molecular systems has proven to be a promising approach for learning 3D molecular representations. However, existing methods are limited to modeling the molecular energy states from classical mechanics. 
This limitation results in a significant oversight of quantum mechanical effects, such as \textit{quantized (discrete) energy level structures}, which offer a more accurate estimation of molecular energy and can be experimentally measured through energy spectra.
In this paper, we propose to utilize the energy spectra to enhance the pre-training of 3D molecular representations (\themodel), thereby infusing the knowledge of quantum mechanics into the molecular representations.
Specifically, we propose SpecFormer, a multi-spectrum encoder for encoding molecular spectra via masked patch reconstruction. By further aligning outputs from the 3D encoder and spectrum encoder using a contrastive objective, we enhance the 3D encoder's understanding of molecules.
Evaluations on public benchmarks reveal that our pre-trained representations surpass existing methods in predicting molecular properties and modeling dynamics.
\setcounter{footnote}{1}
\footnotetext{The code is released at \hypersetup{urlcolor=black}\url{https://github.com/AzureLeon1/MolSpectra}}
\end{abstract}

\section{Introduction}

Learning 3D molecular representations from geometric conformations offers a promising approach for understanding molecular geometry and predicting quantum properties and interactions, which is significant in drug discovery and materials science~\citep{Allegro,MACE,Equiformer,HMR,LEFTNet}. Given the scarcity of molecular property labels, self-supervised representation pre-training has been proposed and utilized to provide generalizable representations~\citep{Pre-GNN,GROVER,doi:10.1089/cmb.2023.0187}.

In contrast to contrastive learning~\citep{MolCLR,D-SLA} and masked modeling~\citep{GraphMAE,SimSGT,AUG-MAE} on 2D molecular graphs and molecular languages (e.g., SMILES), the design of pre-training strategies on 3D molecular geometries is more closely aligned with physical principles. Previous studies~\citep{Coord,3D-EMGP} have guided representation learning through denoising processes on 3D molecular geometries, theoretically demonstrating that denoising 3D geometries is equivalent to learning molecular force fields, specifically the negative gradient of molecular potential energy with respect to position. Essentially, these studies reveal that \textit{establishing the relationship between 3D geometries and the energy states of molecular systems is an effective pathway to learn 3D molecular representations.}

However, existing methods are limited to the continuous description (i.e., the potential energy function) of the molecular energy states within the classical mechanics, overlooking the quantized (discrete) energy level structures from the quantum mechanical perspective. 
From the quantum  perspective, molecular systems exhibit quantized energy level structures, meaning that energy states can only assume specific discrete values. 
Specifically, different types of molecular motion, such as electronic, vibrational, and rotational motion, correspond to different energy level structures. Knowledge of these energy levels is crucial in molecular physics and quantum chemistry, as they determine the spectroscopic characteristics, chemical reactivity, and many other important molecular properties. Fortunately, experimental measurements of molecular energy spectra can reflect these structures. Meanwhile, there are many molecular spectra data obtained through experimental measurements or simulations~\citep{DetaNet, multimodal-spectra}. Therefore, \textit{incorporating the knowledge of energy levels into molecular representation learning is expected to facilitate the development of more informative molecular representations.}

\begin{figure}
\begin{center}
\includegraphics[width=\linewidth]{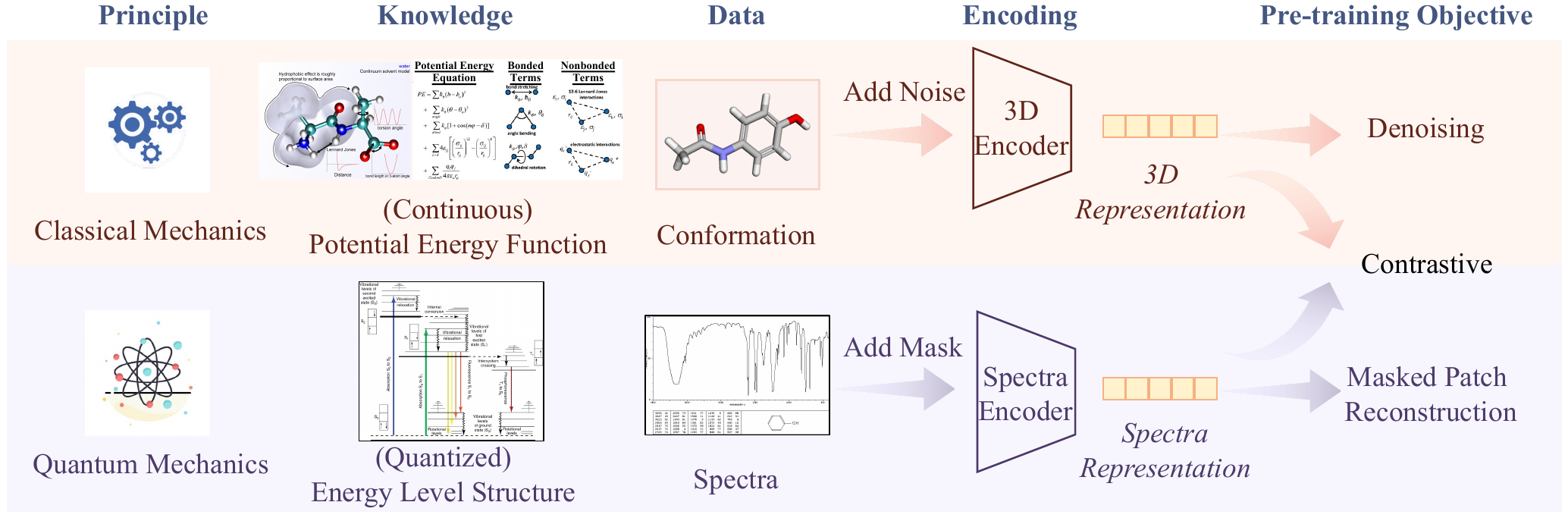}
\end{center}
\caption{
The conceptual view of \themodel, which leverages both molecular conformation and spectra for pre-training.
Prior works only model classical mechanics by denoising on conformations.
}
\label{fig:principle}
\end{figure}

In this paper, we propose \themodel, a framework that incorporates molecular spectra into the pre-training of 3D molecular representations, 
thereby infusing the knowledge of quantized energy level structures into the representations, as shown in \cref{fig:principle}. 
In \themodel, we introduce a multi-spectrum encoder, SpecFormer, to capture both intra-spectrum and inter-spectrum peak correlations by training with a masked patches reconstruction (MPR) objective. 
Additionally, we employ a contrastive objective to distills the spectral features and its inherent knowledge into the learning of 3D representations. 
After pre-training, the resulting 3D encoder can be fine-tuned for downstream tasks, providing expressive 3D molecular representations without the need for associated spectral data.
Extensive experiments over different downstream molecular property prediction benchmarks shows the superiority of \themodel.

In summary, our contributions are as follows:
\begin{itemize}[leftmargin=*]
    \item We introduce quantized energy level structures and molecular spectra into 3D molecular representation pre-training for the first time, surpassing previous work that relied solely on physical knowledge within the scope of classical mechanics.
    \item We propose SpecFormer as an expressive multi-spectrum encoder, along with the masked patches reconstruction objective for spectral representation learning.
    \item We propose a contrastive objective to align molecular representations in the 3D modality and spectral modalities, enabling the pre-trained 3D encoder to infer molecular spectral features in downstream tasks without relying on spectral data.
    \item Experiments across different downstream benchmarks demonstrate that our method effectively enhances the expressiveness of the pre-trained 3D molecular representations.
\end{itemize}
\section{Preliminaries}

\subsection{Notations}

Consider a molecule characterized by its 3D structure and spectra, represented as \(\mathcal{M} = (\va, \vx, \mathcal{S})\). Here, \(\va \in \{1, 2, \ldots, 118\}^N\) specifies the atomic numbers, indicating the types of atoms within the molecule. The vector \(\vx \in \mathbb{R}^{3N}\) describes the conformation of the molecule, while \(\mathcal{S}\) represents its spectra. The parameter \(N\) denotes the number of atoms in the molecule. Note that the atoms are arranged in the same order in both \(\va\) and \(\vx\), ensuring consistency between the atomic numbers and their corresponding spatial coordinates.

\( \mathcal{S} = (\vs_1, \ldots, \vs_{|\mathcal{S}|}) \) represents the set of spectra for a molecule, where \(|\mathcal{S}|\) denotes the number of spectrum types considered. In our study, we focus on three types, so \(|\mathcal{S}|=3\). The first spectrum, \( \vs_1 \in \mathbb{R}^{601} \), is the UV-Vis spectrum, which spans from 1.5 to 13.5 eV with 601 data points at intervals of 0.02 eV. The second spectrum, \( \vs_2 \in \mathbb{R}^{3501} \), is the IR spectrum, covering a range from 500 to 4000 cm\(^{-1}\) with 3501 data points at intervals of 1 cm\(^{-1}\). The third spectrum, \( \vs_3 \in \mathbb{R}^{3501} \), is the Raman spectrum, with the same range and intervals as the IR spectrum. Together, these spectra provide a comprehensive description of the molecular characteristics across different spectral modalities.

\subsection{Pre-training 3D molecular representation via denoising}\label{sec:denoising}

Denoising has emerged as a prominent pre-training objective in 3D molecular representation learning, excelling in various downstream tasks. This method involves training models to predict and remove noise introduced deliberately into molecular structures. This approach is physically interpretable due to its proven equivalence to learning the molecular force field. 

\textbf{Equivalence between denoising and learning molecular force fields.}
The equivalence between coordinate denoising and force field learning is established by \cite{Coord}.
For a given molecule $\mathcal{M}$, perturb its equilibrium structure $\vx_0$ according to the distribution $p(\vx|\vx_0)$, where $\vx$ is the noisy conformation. Assuming the molecular distribution adheres to the energy-based Boltzmann distribution with respect to the energy function $E(\cdot)$, then
\begin{equation}
\begin{aligned}
\mathcal{L}_{\text{Denoising}}(\mathcal{M}) &= \mathbb{E}_{p(\vx|\vx_0)p(\vx_0)} \| \text{GNN}_\theta(\vx) - (\vx - \vx_0) \|^2 \\
&\simeq \mathbb{E}_{p(\vx)} \| \text{GNN}_\theta(\vx) - (-\nabla_{\vx} E(\vx)) \|^2,
\end{aligned}
\end{equation}
where $\text{GNN}_{\theta}(\vx)$ denotes a graph neural network parameterized by $\theta$, which processes the conformation $\vx$ to produce node-level predictions. The notation $\simeq$ signifies the equivalence of different objectives. 
The proof of this equivalence is provided in the \cref{appendix:proof}.
In prior research, the energy function $E(\cdot)$ has been defined in several forms.
Below are three representative studies.

\textbf{Energy function \Romannum{1}: mixture of isotropic Gaussians.}
In Coord~\citep{Coord}, the energy function is approximated using a mixture of isotropic Gaussians centered at the known equilibrium structures to replace the Boltzmann distribution, since these structures are local maxima of the Boltzman distribution. Leveraging the equivalence between the score-matching objective and denoising autoencoders~\citep{ScoreMatching-DAE}, the following denoising-based energy function $E_{\text{Coord}}(\cdot)$ is derived:
\begin{equation}
E_{\text{Coord}}(\vx) = \frac{1}{2 \tau_c^2} (\vx - \vx_0)^\top (\vx - \vx_0).
\label{eq:coord}
\end{equation}
Note that this objective is derived under the assumption of isotropic Gaussian noise, i.e., $p(\vx|\vx_0) \sim \mathcal{N}(\vx_0, \tau_c^2 \mI_{3N})$, where $\mI_{3N}$ represents the identity matrix of size $3N$, and the subscript $c$ indicates the coordinate denoising approach.

\textbf{Energy function \Romannum{2}: mixture of anisotropic Gaussians.}
Considering rigid and flexible components in molecular structures, isotropic Gaussian can lead to significant approximation errors.
To address the anisotropic distribution, Frad~\citep{Frad} introduces hybrid noise on dihedral angles of rotatable bonds and atomic coordinates, incorporating fractional denoising of the coordinate noise. The equilibrium structure $\vx_0$ is initially perturbed by dihedral angle noise $p(\boldsymbol{\psi}_a|\boldsymbol{\psi}_0) \sim \mathcal{N}(\psi_0, \sigma_f^2 I_m)$, followed by coordinate noise $p(\vx|\vx_a) \sim \mathcal{N}(\vx_a, \tau_f^2 \mI_{3N})$. Here, $\boldsymbol{\psi}_a, \boldsymbol{\psi}_0 \in [0,2\pi)^m$ represent to the dihedral angles of rotatable bonds in structures $\vx_a$ and $\vx_0$, respectively, with $m$ denoting the number of rotatable bonds. The subscript $f$ indicates the fractional denoising approach. Subsequently, the energy function is induced:
\begin{equation}
    E_{\text{Frad}}(\vx) \approx \frac{1}{2} (\vx - \vx_0)^\top \mathbf{\Sigma}_{\tau_f, \sigma_f}^{-1} (\vx - \vx_0),
\end{equation}
where $\mathbf{\Sigma}_{\tau_f, \sigma_f} = \tau_f^2 \mI_{3N} + \sigma_f^2 \mC\mC^{\top}$, and $\mC \in \mathbb{R}^{3N \times m}$ is a matrix used to linearly transform the dihedral angle noise into coordinate change, expressed as $\Delta \vx \approx \mC \Delta \boldsymbol{\psi}$.

\textbf{Energy function \Romannum{3}: classical potential energy theory.}
SliDe~\citep{SliDe} derives energy function from classical molecular potential energy theory~\citep{potential1,potential2}. In this form, the total intramolecular potential energy is mainly attributed to three types of interactions: bond stretching, bond angle bending, and bond torsion. The following energy function is derived:
\begin{equation}
\begin{aligned}
E_{\text{SliDe}}(\vr, \vtheta, \boldsymbol{\phi}) =& \frac{1}{2} [\vk^B \odot (\vr - \vr_0)]^\top (\vr - \vr_0) + \frac{1}{2} [\vk^A \odot (\vtheta - \vtheta_0)]^\top (\vtheta - \vtheta_0) \\
+& \frac{1}{2} [\vk^T \odot (\boldsymbol{\phi} - \boldsymbol{\phi}_0)]^\top (\boldsymbol{\phi} - \boldsymbol{\phi}_0),
\end{aligned}
\end{equation}
where $\vr \in (\mathbb{R}_{\geq 0})^{m_1}, \vtheta \in {[0,2\pi)}^{m_2}, \boldsymbol{\phi} \in {[0,2\pi)}^{m_3}$ represent vectors of the bond lengths, bond angles, and bond torsion angles of the molecule, respectively. $\vr_0, \vtheta_0, \boldsymbol{\phi}_0$ correspond to the respective equilibrium values. The parameter vectors $\vk^B, \vk^A, \vk^T$ determine the interaction strength.
\section{The proposed \themodel method}

Considering the complementarity of different spectra, we introduce multiple spectra into molecular representation learning. To effectively comprehend molecular spectra, we designed a Transformer-based multi-spectrum encoder, SpecFormer, along with a masked reconstruction objective to guide its training. Finally, a contrastive objective is employed to align the 3D encoding guided by the denoising objective with the spectra encoding guided by the reconstruction objective, endowing the 3D encoding with the capability to understand spectra and the knowledge they encompass.

\begin{figure}
\begin{center}
\includegraphics[width=\linewidth]{./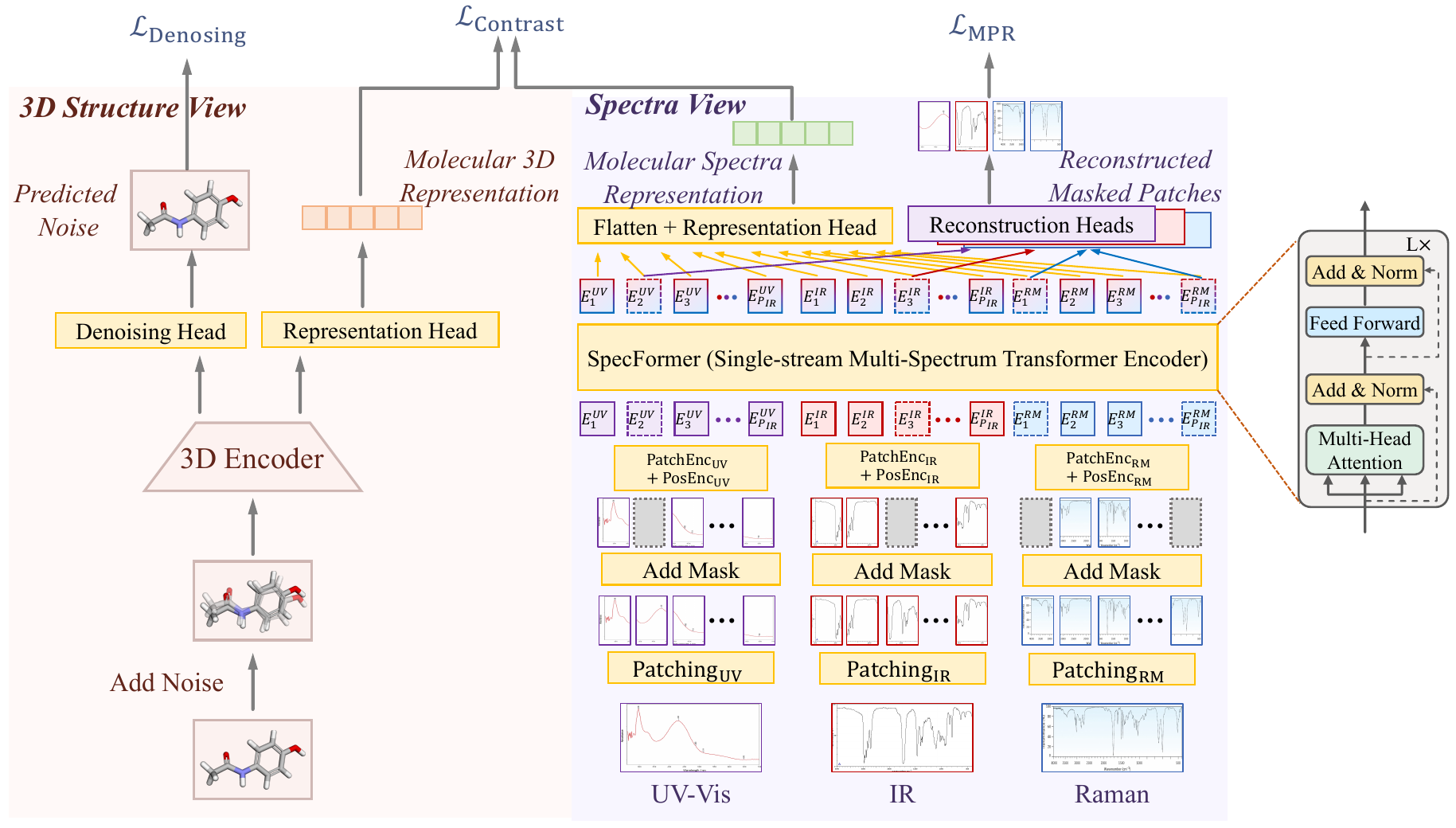}
\end{center}
\caption{Overview of the \themodel pre-training framework. Our pre-training framework comprises three sub-objectives: the denoising objective and the MPR objective, which respectively guide the representation learning of the 3D and spectral modalities, and the contrastive objective, which aligns the representations of both modalities.}
\label{fig:overview}
\end{figure}

\subsection{SpecFormer: a single-stream encoder for multi-modal energy spectra}\label{sec:specformer}

For different types of spectra, each spectrum is independently patched and initially encoded. Then, all the resulting patch embeddings are concatenated and encoded using a Transformer-based encoder.

\textbf{Patching.}
Compared to directly encoding individual frequency points, we divided each spectrum into multiple patches. This approach offers two distinct advantages: (\romannumeral 1) By forming patches from adjacent frequency points, local semantic features, such as absorption peaks, can be captured more effectively. (\romannumeral 2) It reduces the computational overhead of subsequent Transformer layers.
Technically, each spectrum $\vs_i \in \mathbb{R}^{L_i}$ where $i=1,\cdots,|\mathcal{S}|$ is first divided into patches according to the patch length $P_i$ and the stride $D_i$. When $0<D_i<P_i$, the consecutive patches will be overlapped with overlapping region length $P_i-D_i$. When $D_i=P_i$, the consecutive patches will be non-overlapped. $L_i$ denotes the length of $\vs_i$. The patching process on each spectrum will generate a sequence of patches $\vp_i \in \mathbb{R}^{N_i\times P_i}$, where $N_i = \left\lfloor \frac{L_i - P_i}{D_i} \right\rfloor + 1$ is the number of patches.

\textbf{Patch encoding and position encoding.}
Prior to be fed into the encoder, the patches of the $i$-th spectrum are mapped to the latent space of dimension $d$ via a trainable linear projection $\mW_i \in \mathbb{R}^{P_i \times d}$. A learnable additive position encoding $\mW_i^{\text{pos}} \in \mathbb{R}^{N_i \times d}$ is applied to maintain the order of the patches: $\vp_i^{\prime} = \vp_i \mW_i + \mW_i^{\text{pos}}$, where $\vp_i^{\prime} \in \mathbb{R}^{N_i \times d}$ denotes the latent representation of the spectrum $\vs_i$ that will be fed into the subsequent SpecFormer encoder.

\textbf{SpecFormer: multi-spectrum Transformer encoder.}
Although several encoders have been proposed to map molecular spectrum into implicit representations, such as the CNN-AM~\citep{CNN-AM} based on one-dimensional convolution, these encoders are designed to encode only a single type of spectrum. In our approach, multiple molecular spectra (UV-Vis, IR, Raman) are jointly considered. When encoding multiple spectra of a molecule simultaneously, an observation caught our attention and led us to adopt a Transformer-based encoder with multiple spectra as input, similar to the single-stream Transformer in multi-modal learning~\citep{single-stream}. 
\begin{wrapfigure}[15]{r}{0.6\linewidth}
\begin{center}
\vspace{-14pt}
\includegraphics[width=\linewidth]{./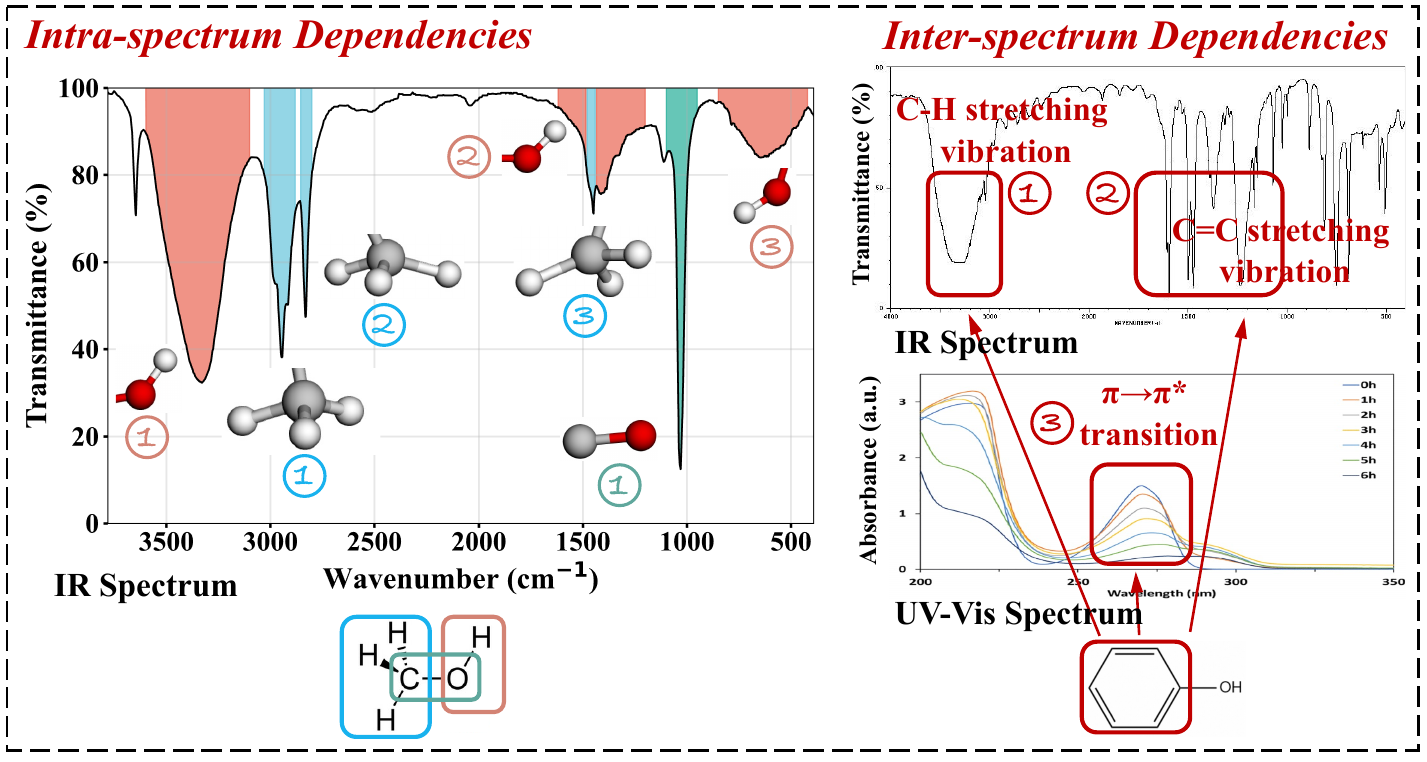}
\end{center}
\vspace{-12pt}
\caption{Illustrate of intra-spectrum (left) and inter-spectrum (right) dependencies.}
\label{fig:dependencies}
\end{wrapfigure}
The observation refers to the fact that the same functional group not only causes multiple peaks within a single spectrum, but also generates peaks across different spectra. 
As shown on the left of \cref{fig:dependencies}, the different vibrational modes of the methyl group ($\text{-CH}_3$) in methanol ($\text{CH}_3\text{OH}$) result in three peaks in the IR spectrum, indicating \textit{intra-spectrum dependencies} among these peaks. 
A similar phenomenon occurs with the hydroxyl group ($\text{-OH}$) in methanol. 
Additionally, the aromatic ring in phenol ($\text{C}_6\text{H}_5\text{OH}$), shown on the right of \cref{fig:dependencies}, not only produces multiple peaks in the IR spectrum due to different vibrational modes but also causes an absorption peak near 270 nm in the UV-Vis spectrum due to the $\pi \rightarrow \pi^*$ transition in the aromatic ring, demonstrating the existence of \textit{inter-spectrum dependencies}.
Such dependencies have been theoretically studied, for example, in the context of vibronic coupling~\citep{Vibronic-Coupling}.

To capture intra-spectrum and inter-spectrum dependencies, we concatenate the embeddings obtained from patch encoding and position encoding of different spectra: $\hat{\vp} = \vp_1^{\prime}\| \cdots \| \vp_{|\mathcal{S}|}^{\prime} \in \mathbb{R}^{(\sum_{i=1}^{|\mathcal{S}|} N_i) \times d}$, and then input them into the Transformer encoder as depicted in \cref{fig:overview}.
Then each head $h = 1, \ldots, H$ in multi-head attention will transform them into query matrices $\mQ_h = \hat{\vp} \mW_h^Q$, key matrices $\mK_h = \hat{\vp} \mW_h^K$ and value matrices $\mV_h = \hat{\vp} \mW_h^V$, where $\mW_h^Q, \mW_h^K \in \mathbb{R}^{d \times d_k}$ and $\mathbf{W}_h^V \in \mathbb{R}^{d \times \frac{d}{H}}$.  Afterward, a scaled product is utilized to obtain the attention output $\mO_h \in \mathbb{R}^{(\sum_{i=1}^{|\mathcal{S}|} N_i) \times \frac{d}{H}}$:
\begin{equation}
\mO_h = \text{Attention}(\mQ_h, \mK_h, \mV_h) = \text{Softmax}\left(\frac{\mQ_h \mK_h^{\top}}{\sqrt{d_k}}\right)\mV_h.
\end{equation}
The multi-head attention block also includes BatchNorm layers and a feed forward network with residual connections as shown in \cref{fig:overview}. After combining the outputs of all heads, it generates the representation denoted as $\vz \in \mathbb{R}^{(\sum_{i=1}^{|\mathcal{S}|} N_i) \times d}$. Finally, a flatten layer with representation projection head is used to obtain the molecular spectra representation $\vz_s \in \mathbb{R}^{d}$.

\subsection{Masked patches reconstruction pre-training for spectra}

Before distilling the spectra information into 3D molecular representation learning, we need first ensure that the spectrum encoder can effectively comprehend molecular spectra and generate spectral representations. Considering the success of masking modeling across various domains~\citep{Bert,MAE,GraphMAE,Mole-Bert,AUG-MAE,PatchTST}, we propose a masked patches reconstruction (MPR) objective to guide the training of SpecFormer.

After the patching step, we randomly select a portion of patches according to the mask ratio $\alpha$ and replace them with zero vectors to implement the masking. Subsequently, the masked patches undergo patch encoding and position encoding. In this way, the semantics of the masked patches (the absorption intensity at specific wavelengths) are obscured during patch encoding, while the positional information is retained to facilitate the reconstruction of the original semantics.

After encoding by SpecFormer, the encoded results corresponding to the masked patches are input into a spectrum-specific reconstruction head to reconstruct the original spectral values that were masked. The mean squared error (MSE) between the reconstruction results and the original masked spectra serves as the loss function for the MPR task, guiding the training of SpecFormer:
\begin{equation}
\begin{aligned}
    \mathcal{L}_{\mathrm{MPR}} = \sum_{i=1}^{|\mathcal{S}|} \mathbb{E}_{p_{i,j} \in \widetilde{\mathcal{P}}_i} \|\hat{\vp}_{i,j}- \vp_{i,j}\|_2^2, 
\end{aligned}
\label{sce}
\end{equation}
where $\widetilde{\mathcal{P}}_i$ denotes the set of masked patches in the $i$-th type of molecular spectra, and $\hat{\vp}_{i,j}$ denotes the reconstructed patch corresponding to the masked patch $\vp_{i,j}$.

\subsection{Contrastive learning between 3D structures and spectra}

Under the guidance of the denoising objective for 3D representation learning and the MPR objective for spectral representation learning, we further introduce a contrastive objective to align the representations across these two modalities. We treat the 3D representation $\vz_x \in \mathbb{R}^{d}$ and spectral representation $\vz_s \in \mathbb{R}^{d}$ of the same molecule as positive samples, and negative samples otherwise. Subsequently, the consistency between positive samples and the discrepancy between negative samples are maximized through the contrastive objective. Given the theoretical and empirical effectiveness, we employ InfoNCE~\citep{InfoNCE} as the contrastive objective:
\begin{equation}
\resizebox{\textwidth}{!}{$
    \mathcal{L}_{\text{Contrast}} = -\frac{1}{2} \mathbb{E}_{p(\vz_x, \vz_s)} \left[ \log \frac{\exp(f_x(\vz_x, \vz_s))}{\exp(f_x(\vz_x, \vz_s)) + \sum_j \exp(f_x(\vz_x^j, \vz_s))} + \log \frac{\exp(f_s(\vz_s, \vz_x))}{\exp(f_s(\vz_s, \vz_x)) + \sum_j \exp(f_s(\vz_s^j, \vz_x))} \right],
$}
\label{eq:contrast}
\end{equation}
where $\vz_x^j, \vz_s^j$ are randomly sampled 3D and spectra views regarding to the positive pair $(\vz_x, \vz_s)$. $f_x(\vz_x, \vz_s)$ and $f_s(\vz_s, \vz_x)$ are scoring functions for the two corresponding views, with flexible formulations. Here we adopt $f_x(\vz_x, \vz_s) = f_s(\vz_s, \vz_x) = \langle \vz_x, \vz_s \rangle$.

Note that the denoising objective can utilize any form from existing 3D molecular representation pre-training studies, enabling seamless integration of our method into these frameworks.

\subsection{Two-stage pre-training pipeline}\label{sec:two-stage}

Previous pre-training efforts for 3D molecular representation have been conducted on unlabeled datasets using denoising objective. These datasets typically provide only equilibrium 3D structures without offering spectra for all molecules. To enhance the pre-training effect by incorporating spectra while leveraging denoising pre-training, we employ a two-stage pre-training approach. The first stage involves training on a larger dataset~\citep{PCQM} without spectra using only the denoising objective. Subsequently, the second stage involves training on a dataset that includes spectra using the complete objective as follows:
\begin{equation}
    \mathcal{L} = \beta_{\text{Denoising}} \mathcal{L}_{\text{Denoising}} + \beta_{\text{MPR}} \mathcal{L}_{\text{MPR}} + \beta_{\text{Contrast}} \mathcal{L}_{\text{Contrast}},
    \label{eq:objective}
\end{equation}
where $\beta_{\text{Denoising}}$, $\beta_{\text{MPR}}$, and $\beta_{\text{Contrast}}$ denote the weights of each sub-objective.
\section{Experiments}

To comprehensively evaluate the impact of molecular spectra on molecular tasks, we first 
verify the effectiveness of molecular spectra in the training-from-scratch method for the downstream task.
Furthermore, we evaluate the effectiveness of our pre-training framework \themodel.

\subsection{Effectiveness of molecular spectra in training from scratch}\label{sec:exp-train-from-scratch}
This pilot experiment aims to demonstrate the rationality for incorporating molecular spectra into pre-training. We introduce additional spectral features into a train-from-scratch molecular property prediction model to observe the impact of spectral information on prediction outcomes. We employ EGNN~\citep{EGNN}, a representative 3D molecular encoder, equipped with an MLP-based prediction head as the baseline model. While EGNN encodes the 3D representations, {the UV-Vis spectrum of each molecule provided by the QM9S~\citep{DetaNet} dataset} is encoded into spectral representations by a spectrum encoder. Before making predictions with the final MLP, we concatenate the spectral and 3D representations for prediction. The results are presented in \cref{tab:train-from-scratch}.

\begin{table}[th]
  \caption{Performance (MAE $\downarrow$) when training from scratch on QM9 dataset.}
  \label{tab:train-from-scratch}
  \centering
  \resizebox{\textwidth}{!}{
  \begin{tabular}{l c c c c c c c c c c c c}
  \toprule
    Task &  \makecell[c]{$\mu$} &  \makecell[c]{$\alpha$ } &  \makecell[c]{homo } & \makecell[c]{lumo} & \makecell[c]{gap} & \makecell[c]{$R^2$ } & \makecell[c]{ZPVE } & \makecell[c]{$U_0$ } & \makecell[c]{$U$ } & \makecell[c]{$H$ } & \makecell[c]{$G$} & \makecell[c]{$C_v$ }
    \\
    Units &  \scriptsize (D) &  \scriptsize ($a_0^3$) & \scriptsize  (meV) &\scriptsize (meV) &\scriptsize (meV) & \scriptsize($a_0^2$) &\scriptsize (meV) &\tiny (meV) &\tiny (meV) &\tiny (meV) &\tiny (meV) & \tiny\makecell[c]{($\frac{cal}{mol\cdot K}$)}\\
    \midrule
    w/o spectra & 0.029 & 0.071 & 29 & 25 & 48 & 0.106 & 1.55 & 11 & 12 & 12 & 12 & 0.031 \\
    \midrule
    w/ spectra & \first{0.027} & \first{0.049} & \first{28} & \first{24} & \first{43} & \first{0.084} & \first{1.45} & \first{10} & \first{11} & \first{10} & \first{10} & \first{0.030} \\
    \bottomrule
  \end{tabular}
  }
\end{table}

We observe that by directly concatenating spectral representations, the performance of molecular property prediction can be effectively enhanced. This indicates that the information from molecular spectra is beneficial for downstream molecular property prediction. Further incorporating molecular spectra into the pre-training phase of molecular representation has the potential to enhance the informativeness and generalization capability of the representations, thereby broadly improving the performance of downstream tasks.

\subsection{Effectiveness of molecular spectra in representation pre-training}

We conduct experiments to evaluate MolSpectra by first introducing spectral data into the pre-training of 3D representations, followed by evaluating the performance on downstream tasks.
For a comprehensive comparison, two types of baselines are adopted: (1) training-from-scratch methods, including SchNet~\citep{SchNet}, EGNN, DimeNet~\citep{DimeNet}, DimeNet++~\citep{DimeNet++}, PaiNN~\citep{PaiNN}, SphereNet~\citep{SphereNet}, and TorchMD-Net~\citep{TorchMD-Net}; and (2) pre-training methods, including Transformer-M~\citep{Transformer-M}, SE(3)-DDM~\citep{SE3-DDM}, 3D-EMGP~\citep{3D-EMGP}, and Coord. 

\themodel can be seamlessly plugged into any existing denoising method. To evaluate the enhancement provided by our method compared to denoising alone, we select the representative coordinate denoising (Coord) as our denoising sub-objective. This method also serves as our primary baseline.

\subsubsection{Pre-training dataset.}
As described in \cref{sec:two-stage}, we first perform denoising pre-training on the PCQM4Mv2~\citep{PCQM} dataset, followed by a second stage of pre-training on the QM9Spectra (QM9S)~\citep{DetaNet} dataset, which includes multi-modal molecular energy spectra.  {In both stages, we adopt the denoising objective provided by Coord~\citep{Coord}, as defined in \cref{eq:coord}.}

The QM9S dataset comprises organic molecules from the QM9~\citep{QM9} dataset.
The UV-Vis, IR, and Raman spectra of the molecules are calculated at the B3LYP/def-TZVP level of theory, through frequency analysis and time-dependent density functional theory (TD-DFT).

\begin{table}[h]
\setlength{\tabcolsep}{4pt}
    \caption{Performance (MAE$\downarrow$) on QM9 dataset. The compared methods are divided into two groups: training from scratch and pre-training then fine-tuning. The best results are highlighted in bold.}
    \label{tab:qm9}
    \begin{center}
    \begin{footnotesize}
    \scalebox{0.96}{
    \begin{tabular}{lrrrrrrrrrrrr}
    \toprule
    &  \makecell[c]{$\mu$} &  \makecell[c]{$\alpha$ } &  \makecell[c]{homo } & \makecell[c]{lumo} & \makecell[c]{gap} & \makecell[c]{$R^2$ } & \makecell[c]{ZPVE } & \makecell[c]{$U_0$ } & \makecell[c]{$U$ } & \makecell[c]{$H$ } & \makecell[c]{$G$} & \makecell[c]{$C_v$ }
 \\
 &  \makecell[c]{\scriptsize (D)} &  \makecell[c]{\scriptsize ($a_0^3$)} & \makecell[c]{\scriptsize  (meV)} &  \makecell[c]{\scriptsize (meV)} & 
    \makecell[c]{\scriptsize (meV)} & \makecell[c]{\scriptsize($a_0^2$)} &  \makecell[c]{\scriptsize (meV)} &  \makecell[c]{\tiny (meV)} &  \makecell[c]{\tiny (meV)} &  \makecell[c]{\tiny (meV)} &  \makecell[c]{\tiny (meV)} & \makecell[c]{\tiny\makecell[c]{($\frac{cal}{mol\cdot K}$)}}\\
    \midrule
  SchNet & 0.033 & 0.235 & 41.0 & 34.0 & 63.0 & 0.070 & 1.70 & 14.00 & 19.00 & 14.00 & 14.00 & 0.033\\
  EGNN & 0.029 & 0.071 & 29.0 & 25.0 & 48.0 & 0.106 &   1.55 & 11.00 & 12.00 & 12.00 & 12.00 & 0.031\\
  DimeNet++ & 0.030 & 0.044 & 24.6 & 19.5 & 32.6 & 0.330 & 1.21 & 6.32 & 6.28 & 6.53 & 7.56 & 0.023\\
  PaiNN & 0.012 & 0.045 & 27.6 & 20.4 & 45.7 & 0.070 & 1.28 & 5.85 & 5.83 & 5.98 & 7.35 & 0.024\\
  SphereNet & 0.025 & 0.045 & 22.8 & 18.9 & 31.1  & 0.270 & \first{1.12} & 6.26&  6.36 & 6.33 &7.78 &0.022\\ 
    TorchMD-Net & 0.011 & 0.059 & 20.3 & 17.5 & 36.1 & \first{0.033} & 1.84 & 6.15 & 6.38 & 6.16 & 7.62 & 0.026 \\
   \midrule 
    Transformer-M & 0.037 & \first{0.041} & 17.5 & 16.2 & 27.4 & 0.075 & 1.18 & 9.37 & 9.41 & 9.39 & 9.63 & 0.022 \\
    SE(3)-DDM  & 0.015 & 0.046 & 23.5 & 19.5 & 40.2 & 0.122 & 1.31 & 6.92 & 6.99 & 7.09 & 7.65 & 0.024 \\
    3D-EMGP & 0.020 & 0.057 & 21.3 & 18.2 & 37.1 & 0.092 & 1.38 & 8.60 & 8.60 & 8.70 & 9.30 & 0.026 \\
    Coord & 0.016 & 0.052 & 17.7 & 14.7 & 31.8 & 0.450 & 1.71 & 6.57  &  6.11  &  6.45  &  6.91  & \first{0.020}    \\  
    MolSpectra & {\gtcoord\first{0.011}} & {\gtcoord0.048} & {\gtcoord\first{15.5}} & {\gtcoord\first{13.1}} & {\gtcoord\first{26.8}} & {\gtcoord0.410} & {\eqcoord1.71} & {\gtcoord\first{5.67}} & {\gtcoord\first{5.45}} & {\gtcoord\first{5.87}} & {\gtcoord\first{6.18}} & {0.021} \\
    \bottomrule
    \end{tabular}
    }
    \end{footnotesize}
    \end{center}
    \vskip -0.2in
\end{table}

\subsubsection{QM9}
The QM9 dataset is a quantum chemistry dataset comprising over 134,000 small molecules, each consisting of up to 9 hydrogen (H), carbon (C), nitrogen (N), oxygen (O), and fluorine (F) atoms. This dataset provides an equilibrium geometric conformation for each molecule along with 12 property labels.
The dataset is divided into a training set of 110k molecules, a validation set of 10k molecules, and a test set containing the remaining over 10k molecules.
Prediction errors are measured using the mean absolute error (MAE). The experimental results are presented in \cref{tab:qm9}.

The 3D molecular representations pre-trained using our method are fine-tuned and used for prediction across various properties, 
{achieving state-of-the-art performance in 8 out of 12 properties and outperforms Coord in 10 out of 12 properties.}
In conjunction with the observations in \cref{sec:exp-train-from-scratch}, the performance improvement can be attributed to our incorporation of an understanding of molecular spectra and the knowledge they entail into the 3D molecular representations.

\begin{table}[h]
\setlength{\tabcolsep}{4pt}
    \caption{Performance (MAE$\downarrow$) on MD17 force prediction (kcal/mol/ $\mathring{\textnormal{A}}$). {The methods are divided into two groups: training from scratch and pre-training then fine-tuning.} The best results are in bold. 
    }
    \label{tab:md17}
    \begin{center}
    \begin{footnotesize}
    \begin{tabular}{lccccccccc}
    \toprule
    & Aspirin & Benzene & Ethanol & \makecell[c]{ Malonal\\-dehyde} & \makecell[c]{Naphtha\\-lene} & \makecell[c]{Salicy\\-lic Acid} & Toluene & Uracil \\ 
    \midrule   
    SphereNet & 0.430 & 0.178 & 0.208 & 0.340 & 0.178 & 0.360 & 0.155 & 0.267 \\ 
    SchNet  & 1.350  & 0.310  & 0.390  & 0.660  & 0.580  & 0.850  & 0.570  & 0.560 \\ 
    DimeNet  & 0.499  & 0.187  & 0.230  & 0.383  & 0.215  & 0.374  & 0.216  & 0.301 \\  
    PaiNN  & 0.338  & -  & 0.224  & 0.319  & 0.077  & 0.195   & 0.094  & 0.139 \\ 
    TorchMD-Net
    & 0.245 & 0.219 & 0.107 & 0.167  & 0.059 & 0.128  & 0.064 & 0.089\\ 
    \midrule   
    SE(3)-DDM*  & 0.453 & - & 0.166 & 0.288 & 0.129 & 0.266 & 0.122 & 0.183\\ 
    Coord & 0.211  &  {0.169}  &  0.096  &  {0.139}  &  \first{0.053}  &  0.109  &  \first{0.058}  &  \first{0.074}\\
    MolSpectra  & \first{0.099} & \first{0.097} & \first{0.052} & \first{0.077} & {0.085} & \first{0.093} & {0.075} & 0.095\\
    \bottomrule
    \end{tabular}
    \end{footnotesize}
    \end{center}
\end{table}

\subsubsection{MD17}

The MD17 dataset contains molecular dynamics trajectories for eight organic molecules, including aspirin, benzene, and ethanol. It offers 150k to nearly 1M conformations per molecule, with energy and force labels. Unlike QM9, MD17 emphasizes dynamic behavior in addition to static properties. We use a standard limited data split: models train on 1k samples, validate on 50, and test on the rest. Performance is evaluated using MAE, with results in \cref{tab:md17}.

Our approach also results in the expected performance improvement on MD17. MD17 is a dataset comprising a large number of non-equilibrium molecular structures and their corresponding force fields, which serves to evaluate a model's understanding of molecular dynamics. However, previous pre-training methods based solely on denoising have only learned force field patterns at static equilibrium states, failing to adequately capture the dynamic evolution of molecular systems. In contrast, our \themodel learns the dynamic evolution of molecules by understanding energy level transition patterns, thereby outperforming denoising-based pre-training methods.

\subsection{Sensitivity analysis of patch length $P_i$, stride $D_i$, and mask ratio $\alpha$}\label{sec:sensitivity}
\vspace{-7pt}
\begin{table}[h]
\setlength{\tabcolsep}{4pt}
\begin{minipage}[t]{0.55\textwidth}
    \caption{Sensitivity of patch length and stride.}
    \label{tab:ablation-1}
    \begin{center}
    \begin{small}
    \begin{tabular}{cc|c|ccc}
    \toprule
    patch length & stride & overlap ratio & \makecell[c]{homo} & \makecell[c]{lumo} & \makecell[c]{gap} \\
    \midrule
    20 & 5  & {75\%}    & {15.9} & {13.7} & {28.0} \\
    20 & 10 & 50\%     & {\first{15.5}} & {\first{13.1}} & {\first{26.8}}\\
    20 & 15 & {25\%}    &  {16.1} & {13.6} & {28.1}\\
    20 & 20 &  0\%     & {15.7} & {13.5} & {27.5} \\
    16 & 8  & 50\%     & {16.0} & {13.4} & {27.6} \\
    30 & 15 & 50\%     & {15.9} & {14.0} & {28.1} \\
    \bottomrule
    \end{tabular}
    \end{small}
    \end{center}
\end{minipage}
\begin{minipage}[t]{0.01\textwidth}
    \begin{tabular}{l}
 \\
    \end{tabular}
\end{minipage}
\begin{minipage}[t]{0.4\textwidth}
\caption{Sensitivity of mask ratio.}
    \label{tab:ablation-2}
    \begin{center}
    \begin{small}
    \begin{tabular}{c|ccc}
    \toprule
    mask ratio & \makecell[c]{homo} & \makecell[c]{lumo} & \makecell[c]{gap} \\
    \midrule
    0.05 & {15.7} & {13.4} & {29.7} \\
    0.10  & {\first{15.5}} & {\first{13.1}} & {\first{26.8}}\\
    0.15 & {15.7} & {13.5} & {28.0} \\
    0.20 & {16.0} & {13.6} & {28.1} \\
    0.25 & {16.3} & {13.5} & {28.0} \\
    0.30 & {16.2} & {13.7} & {29.0} \\
    \bottomrule
    \end{tabular}
    \end{small}
    \end{center}
\end{minipage}
\end{table}
We conduct experiments to evaluate the impact of patch length $P_i$, stride $D_i$, and mask ratio $\alpha$. Results are summarized in \cref{tab:ablation-1} and \cref{tab:ablation-2}.

From \cref{tab:ablation-1}, we observe that when consecutive patches have overlap ($D_i<P_i$), the performance of pre-training is superior compared to scenarios without overlap ($D_i=P_i$). Specifically, the performance is optimal when the stride is half of the patch length. This is because appropriate overlap can better preserve and capture local features, particularly the information at the patch boundaries. 
Additionally, we find that choosing an appropriate patch length further enhances performance. In our experiments, the configuration of $P_i=20, D_i=10$ {yields} the best results.

Regarding the mask ratio, $\alpha=0.10$ is a preferable choice. A small mask ratio results in insufficient MPR optimization, hindering SpecFormer training. Conversely, a large mask ratio causes excessive spectral perturbation, degrading performance when aligning with the 3D representations with the contrastive objective. An appropriate mask ratio strikes a balance between these two aspects.

\subsection{{Ablation study}}
{To rigorously demonstrate the contributions of masked patches reconstruction, the incorporation of molecular spectra, and each spectral modality, 
we conducted an ablation study on them.}

\begin{wraptable}[8]{r}{0.45\linewidth}
\vspace{-8pt}
\caption{Ablation of optimization objectives.}
    \label{tab:ablation-3}
    \vspace{-6pt}
    \begin{center}
    \begin{small}
    \begin{tabular}{lccc}
    \toprule
    & \makecell[c]{homo} & \makecell[c]{lumo} & \makecell[c]{gap} \\
    \midrule
    \themodel & {{15.5}} & {{13.1}} & {{26.8}}\\
    w/o MPR & {16.4} & {14.1} & {29.7} \\
    {w/o MPR, Contrast} & {17.5} & {14.4} & {31.2} \\
    \bottomrule
    \end{tabular}
    \end{small}
    \end{center}
\end{wraptable}
\textbf{Ablation study of masked patches reconstruction.}
We remove the MPR loss to analyze the impact of masked patches reconstruction, referred to as ``w/o MPR" in \cref{tab:ablation-3}. 
Removing the MPR objective leads to performance deterioration. This is consistent with the sensitivity analysis of the mask ratio $\alpha$ in \cref{sec:sensitivity}, as removing MPR is an extreme case where $\alpha=0$.
This decline is due to the lack of effective guidance in training SpecFormer.
Using an undertrained SpecFormer for contrastive learning with 3D encoder outputs limits performance improvement.

\textbf{Ablation study of molecular spectra.}
We retain only the denoising loss, removing both the MPR loss and contrastive loss, referred to as ``w/o MPR, Contrast" in \cref{tab:ablation-3}. The only difference between this variant and \themodel is the incorporation of molecular spectra into the pre-training.
The "w/o MPR, Contrast" results are inferior to those of MolSpectra, highlighting that incorporating molecular spectra effectively enhances the quality and generalizability of molecular 3D representations.

\begin{wraptable}[8]{r}{0.47\linewidth}
\vspace{-10pt}
\caption{Ablation of spectral modalities.}
    \label{tab:ablation-4}
    \vspace{-6pt}
    \begin{center}
    \begin{small}
    \resizebox{0.98\linewidth}{!}{%
    \begin{tabular}{ccc|ccc}
    \toprule
     UV-Vis & IR & Raman & \makecell[c]{homo}		& \makecell[c]{lumo}		& \makecell[c]{gap}	\\
    \midrule
    \Checkmark & \Checkmark & \Checkmark  & {15.5} & {13.1} & {26.8}\\
    - & \Checkmark & \Checkmark  & {15.8} & {13.3} & {27.1}\\
    \Checkmark & - & \Checkmark  & {16.6} & {14.1} & {28.9}\\
    \Checkmark & \Checkmark & - & {16.1} & {13.9} & {28.3}\\
    \bottomrule
    \end{tabular}
    }
    \end{small}
    \end{center}
\end{wraptable}
\textbf{Ablation study of each spectral modality.}
To evaluate the contributions of each spectral modality to the performance, we conduct an ablation study for each modality. The results are presented in \cref{tab:ablation-4}. It can be observed that each spectral modality contributes differently, with the UV-Vis spectrum having the smallest contribution and the IR spectrum the largest, likely due to the varying information content in each modality.
\section{Related Work}

\textbf{3D molecular pre-training.}
Molecular 2D structures are typically represented as graphs and modeled using graph learning methods~\citep{MPNN,GSLB,RAGraph}. However, 3D molecular structures provide critical geometric information that is essential for understanding physicochemical properties~\citep{MolScalingLaw,MolDataPruning,Pin-Tuning,DIVE}, which cannot be directly inferred from 2D graphs or SMILES representations~\citep{TGM-DLM}. Designing effective strategies for pre-training 3D molecular representations remains challenging due to the geometric symmetries inherent in 3D structures and their strong connection to physical knowledge, such as potential energy functions.

Denoising the geometric structure has been demonstrated as an effective strategy for 3D representation pre-training~\citep{SE3-DDM,3D-EMGP,GeoTMI,Uni-Mol,MolDiffusionSurvey}. Coordinate denoising (Coord)~\citep{Coord} first theoretically proves that the denoising objective is equivalent to learning 
the gradient of the potential energy with respect to positions, essentially the force field. 
Building on this work, fractional denoising (Frad)~\citep{Frad} 
introduces dihedral angle noise to optimize the sampling of low-energy structures.
Further, SliDe~\citep{SliDe} incorporates a more rigorous potential energy from classical mechanics.
Another line of research simultaneously leverages both 2D and 3D structures for pre-training molecular representations, addressing the complementarity of the two modalities~\citep{GeomGCL,Unified23D,MoleculeSDE,MoleculeJAE,MoleBlend} or the computational complexity of 3D structure determination~\citep{GraphMVP,3D-Informax,3D-PGT}.

Although these studies elucidate the relationship between molecular 3D structures and their energy states, they remain limited to the description of molecular energy states within classical mechanics, without considering the quantized energy level structures as described by quantum mechanics.

\textbf{Molecular spectroscopy.}
Molecular spectroscopy studies interactions between molecules and electromagnetic radiation. Analyzing spectra provides valuable insights into molecular structure, composition, and dynamics~\citep{Orbitrap-Astral}. 
When encountering unknown substances, researchers conduct spectroscopic measurements on samples and compare the observed spectra with libraries for identification. To expand library coverage, machine learning methods are widely used to predict molecules' spectra~\citep{DetaNet,NEIMS,DeepGP}.

Some studies incorporate physical principles into spectra prediction models as inductive biases, including molecular dynamics simulations via equivariant message passing \citep{PaiNN}, fragmentation~\citep{LC-MS,MolDiscovery,SCARF-Weave}, motifs~\citep{MoMS-Net}, and long-distance atomic interactions~\citep{MassFormer}. Another line of research approach bypasses spectral library comparison and directly performs de novo structure elucidation from spectra~\citep{MSNovelist,MIST,CNN-AM}.

Since different spectroscopic techniques offer complementary advantages, the joint analysis of multiple spectra can provide comprehensive information~\citep{multimodal-spectra}.
In this study, we encodes multiple spectra, and introduce them into molecular representation pre-training for the first time.
\section{Conclusion}
In this study, we explore pre-training molecular 3D representations beyond classical mechanics. By leveraging the correlation between molecular energy level structures and molecular spectra in quantum mechanics, we introduce molecular spectra for pre-training molecular 3D representations (\themodel). 
By aligning the 3D encoder trained with a denoising objective and the spectrum encoder trained with a masked patch reconstruction objective, we enhance the informativeness and transferability of the resulting 3D representations.

\clearpage

\section*{Acknowledgments}
This work is jointly supported by National Science and Technology Major Project (2023ZD0120901) and National Natural Science Foundation of China (62372454, 62236010).

\bibliography{reference}
\bibliographystyle{iclr2025_conference}

\clearpage

\appendix
\setcounter{theorem}{0}
\setcounter{table}{0}
\setcounter{figure}{0}
\setcounter{equation}{0}

\renewcommand{\thefigure}{A\arabic{figure}}
\renewcommand{\thetable}{A\arabic{table}}
\renewcommand{\theequation}{A\arabic{equation}}

\resumetocwriting

{
\centering
\LARGE
{\textbf{Appendix}}
\par
}

\renewcommand{\contentsname}{\normalsize Contents of the appendix}
{
  \hypersetup{linkcolor=black}
    \tableofcontents
}

\section{Proof of theoretical results}\label{appendix:proof}
\begin{theorem}[Equivalence between the denoising objective and the learning of molecular force fields~\citep{Coord}]
    Assume the conformation distribution is a mixture of Gaussian distribution centered at the equilibriums:
    \begin{equation} \small
    p(\vx) =\int p ( \vx | \vx_0)p(\vx_0),\ p (\vx|\vx_0)\sim \mathcal{N}(\vx_0,\tau^2 I_{3N})
    \end{equation}
    $\vx_0,\ \vx\in \mathbb{R}^{3N}$ are equilibrium and noisy conformation respectively, $N$ is the number of atoms in the molecule. It relates to molecular energy by Boltzmann distribution $p(\vx) \propto exp(-E(\vx))$.
    
    Then given a sampled molecule $\mathcal{M}$, the denoising loss on the conformation coordinates is an equivalent optimization target to force field prediction:
\textnormal{
\begin{align}\small
     \mathcal{L}_{\text{Denoising}} (\mathcal{M})& =\mathbb{E}_{p (\vx|\vx_0)p (\vx_0)}||\text{GNN}_{\theta} (\vx) - (\vx-\vx_0)||^2\label{eq:app coord loss} \\
    &\simeq  \mathbb{E}_{p (\vx)}||\text{GNN}_{\theta} (\vx) -(- \nabla _{\vx} E(\vx))||^2,\label{eq:app coord target}
\end{align}
}
where \textnormal{$\text{GNN}_{\theta} (\vx)$} denotes a graph neural network with parameters $\theta$ which takes conformation $\vx$ as an input and returns node-level noise predictions, $\simeq $ denotes equivalence. 
\end{theorem}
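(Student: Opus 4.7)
The plan is to reduce the theorem to the classical denoising score matching identity (Vincent, 2011), which asserts the equivalence of explicit score matching against the marginal and denoising score matching against the noise kernel. Proceeding in three steps, I would first rewrite the RHS target as a score function. The Boltzmann assumption $p(\vx) \propto \exp(-E(\vx))$ gives $-\nabla_\vx E(\vx) = \nabla_\vx \log p(\vx)$, so \eqref{eq:app coord target} is precisely an explicit score matching loss with respect to the marginal score. Separately, for the isotropic Gaussian kernel $p(\vx|\vx_0) \sim \mathcal{N}(\vx_0, \tau^2 I_{3N})$, a direct differentiation yields $\nabla_\vx \log p(\vx|\vx_0) = -(\vx - \vx_0)/\tau^2$. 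Thus the denoising target $(\vx - \vx_0)$ in \eqref{eq:app coord loss} is, up to the fixed scalar $-\tau^2$, exactly the conditional score; the symbol $\simeq$ is to be read as equivalence up to a $\theta$-independent additive constant and a positive scalar rescaling of the target that can be absorbed into the output parameterization of $\text{GNN}_\theta$.

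Second, I would invoke the denoising score matching identity itself. Expanding $\|\text{GNN}_\theta(\vx) - s(\vx)\|^2$ in both objectives, the $\|s(\vx)\|^2$ pieces are $\theta$-independent and can be discarded; the $\|\text{GNN}_\theta(\vx)\|^2$ terms agree under both expectations by Fubini, since $p(\vx) = \int p(\vx|\vx_0)p(\vx_0)\,d\vx_0$. The only nontrivial step is matching the cross terms, which follows from
\begin{equation}
p(\vx)\,\nabla_\vx \log p(\vx) \;=\; \nabla_\vx p(\vx) \;=\; \int p(\vx_0)\,p(\vx|\vx_0)\,\nabla_\vx \log p(\vx|\vx_0)\,d\vx_0,
\end{equation}
so that, after integrating against $\text{GNN}_\theta(\vx)$, the cross term computed under $p(\vx)$ equals the cross term computed under $p(\vx|\vx_0)p(\vx_0)$. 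Combining this with the rescaling identified in step one yields the claimed equivalence between \eqref{eq:app coord loss} and \eqref{eq:app coord target}.

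The main obstacle I expect is bookkeeping rather than conceptual: carefully tracking the $\tau^2$ factor and making precise exactly what $\simeq$ absorbs (an additive $\theta$-independent constant together with a positive scalar rescaling of the regression target). A secondary technical concern is justifying the interchange of $\nabla_\vx$ with the mixture integral defining $p(\vx)$ and the swap of gradient and expectation in the cross-term computation; both are routine under the mild decay of the Gaussian kernel assumed here, which guarantees the required integrability and allows differentiation under the integral sign.
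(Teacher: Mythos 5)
Your proposal is correct and follows essentially the same route as the paper's proof: translate $-\nabla_\vx E$ into the marginal score via the Boltzmann assumption, pass to the conditional score by the denoising score matching lemma of Vincent (2011) up to a $\theta$-independent constant, compute the Gaussian conditional score $-(\vx-\vx_0)/\tau^2$, and absorb the scalar into $\text{GNN}_\theta$. The only difference is that you also sketch the internal proof of the score-matching lemma (the cross-term identity $p(\vx)\nabla_\vx\log p(\vx)=\int p(\vx_0)p(\vx|\vx_0)\nabla_\vx\log p(\vx|\vx_0)\,d\vx_0$), which the paper simply cites; note only that the absorbed factor $-\tau^2$ is negative, not positive, though this is immaterial since any nonzero scalar can be folded into the network's output layer.
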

\begin{proof}
 According to Boltzmann distribution, \cref{eq:app coord target} is equal to $\mathbb{E}_{p (\vx)}||GNN_{\theta} (\vx) - \nabla _{\vx} \log p (\vx)||^2 $. By using a conditional score matching lemma~\citep{ScoreMatching-DAE}, the equation above is further equal to $\mathbb{E}_{p (\vx|\vx_0)p(\vx_0)}||GNN_{\theta} (\vx) - \nabla _{\vx} \log p (\vx|\vx_0)||^2+T_1$, where $T_1$ is constant independent of $\theta$. Then with the Gaussian assumption, it becomes $\mathbb{E}_{p (\vx|\vx_0)p(\vx_0)}||GNN_{\theta} (\vx) - \frac{\vx_0-\vx}{\tau_c^2}||^2+T_1 $. Finally, since coefficients $-\frac{1}{\tau^2}$ do not rely on the input $\vx$, it can be absorbed into $\text{GNN}_{\theta}$, thus obtaining \cref{eq:app coord loss}.
\end{proof}

\section{Visualization and analysis of spectra}

\begin{figure}[h]
\begin{center}
\includegraphics[width=\linewidth]{./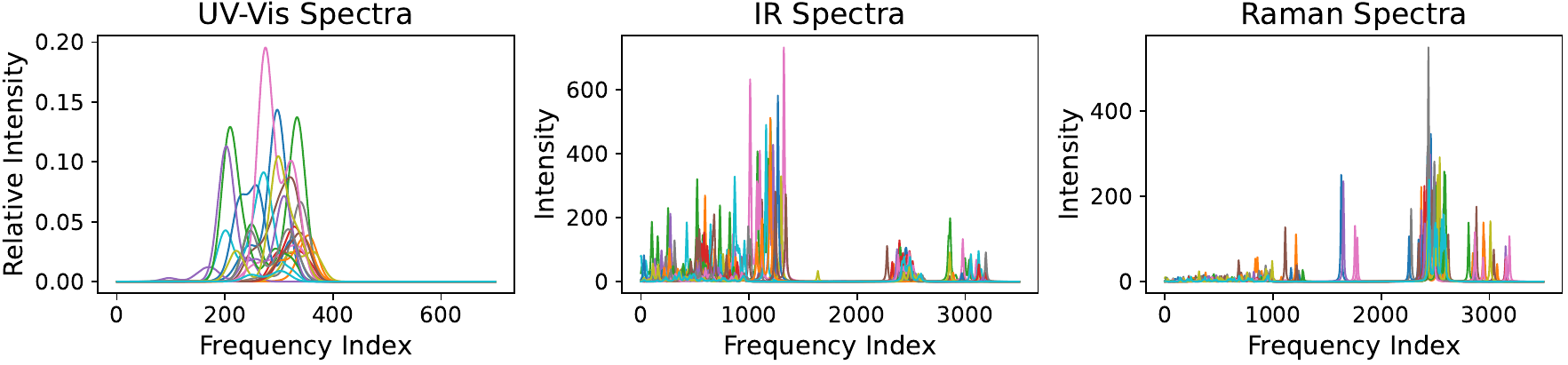}
\end{center}
\caption{Randomly sampled examples of molecular energy spectra.}
\label{fig:visualization}
\end{figure}

In this section, we visualize the three types of spectra we utilize (UV-Vis, IR, Raman) and standardize the initial spectral data based on data analysis. In \cref{fig:visualization}, we visualize 20 randomly sampled spectra from QM9S for each type of spectrum. A notable pattern observed is that, although each spectrum consists of numerous absorption peaks, there are significant differences in the heights (absorption intensities) of these peaks. For instance, in the IR spectra, the absorption intensity at most peaks is around 200, but a few peaks reach an intensity of 800. However, in qualitative analysis, the position and shape of the peaks are more critical than their heights. Therefore, the differences in peak absorption intensities can interfere with model training under the MSE loss metric. To address this issue, we pre-process the absorption intensities of the spectra by applying a $\log_{10}$ transformation to mitigate the interference caused by peak intensity differences.

\section{Implementation details}\label{appendix:hyper-parameters}
\subsection{Hardware and software}
Our experiments are conducted on Linux servers equipped with
184 Intel Xeon Platinum 8469C CPUs,
920GB RAM, and 8 NVIDIA H20 96GB GPUs. Our model is implemented in PyTorch version 2.3.1, PyTorch Geometric version 2.5.3 (https://pyg.org/) with CUDA version 12.1, and Python 3.10.14. 

\subsection{Model configuration}\label{appendix:model-config}
The SpecFormer is implemented using a 3-layer Transformer with 16 attention heads.
Following previous works, we set both $d$ and $d_k$ as 256.
TorchMD-Net~\citep{TorchMD-Net} is adopted as the 3D molecular encoder.
We tune the mask ratio (i.e., $\alpha$) in \{0.05, 0.10, 0.15, 0.20, 0.25, 0.30\}, tune the ``stride/patch length" pair  (i.e., $D_i/P_i$) in \{5/20, 10/20, 15/20, 20/20, 8/16, 15/30\}, and tune the weights of sub-objectives (i.e., $\beta_{\text{Denoising}}$, $\beta_{\text{MPR}}$, and $\beta_{\text{Contrast}}$ ) in \{0.01, 0.1, 1, 10\}.
Since our goal is to align the 3D representations and spectra representations of molecules during the pre-training phase, and not rely on molecular spectra data during downstream fine-tuning, these hyper-parameters related to molecular spectra are tuned on the pre-training dataset.
Based on the results of hyper-parameter tuning, we adopt $\alpha=0.10, D_i=10, P_i=20, \beta_{\text{Denoising}}=1.0, \beta_{\text{MPR}}=1.0$, and $\beta_{\text{Contrast}}=1.0$.

Following SimCLR~\citep{SimCLR}, the contrastive loss in our \cref{eq:contrast} is implemented using in-batch contrastive loss, where positive and negative pairs are constructed within each data batch. Therefore, for each anchor representation in a batch, there is one positive sample and $bs-1$ negative samples, where $bs$ is the batch size. In our method, $bs=128$.

In both pre-training stages, we use the noise generation method and denoising objective provided by Coord~\citep{Coord}, specifically energy function \Romannum{1} as described in \cref{sec:denoising}. The noise is added to atom positions as scaled mixture of isotropic Gaussian noise, with a scaling factor of 0.04. The denoising objective is defined in \cref{eq:coord}.

For baselines, we follow their recommended settings.

\section{Limitations and potential future directions}
One limitation of our method is the availability, scale, and diversity of molecular spectral data. Our current dataset comprises geometric structures of 134,000 molecules, each with three types of spectra (UV-Vis, IR, Raman). To effectively explore the scaling laws of pre-training methods, larger and more diverse molecular spectral datasets are necessary. Encouragingly, molecular spectroscopy has been gaining increasing attention in the research community, with larger and more diverse datasets being released, such as the recent multimodal spectroscopic dataset~\citep{multimodal-spectra}. This development supports advancements in molecular representation learning and other related tasks.

Another limitation is that our proposed SpecFormer can currently only handle one-dimensional molecular spectra. For higher-dimensional spectra, such as two-dimensional NMR and two-dimensional correlation spectra, further development of sophisticated spectrum encoder is needed.

Looking ahead, we envision several future directions in this field. First, there is potential in investigating the scaling laws of pre-training on larger and more diverse molecular spectral datasets. Second, expanding the scope of molecular spectrum encoding to include a wider range, such as NMR, mass spectra, and two-dimensional spectra, could be highly beneficial. Third, while a pre-trained spectral encoder has been developed in our method, we have so far only applied the pre-trained 3D encoder to downstream tasks. Exploring the use of the pre-trained spectral encoder for molecular spectrum-related downstream tasks, such as automated molecular structure elucidation from spectra, represents an promising opportunity. Finally, current molecular 3D pre-training methods are designed based on TorchMD-Net~\citep{TorchMD-Net}. With the development of equivariant message passing neural networks, more expressive backbone architectures, such as Allegro~\citep{Allegro} and MACE~\citep{MACE} have been proposed, improving the prediction of molecular properties when trained from scratch. Extending pre-training strategies to these state-of-the-art architectures holds the promise of further advancing downstream tasks.

\section{More experimental results and discussions}

In addition to Coord, we evaluate the effect of incorporating SliDe into our \themodel.
SliDe~\citep{SliDe} is also a denoising-based pre-training method, utilizing the TorchMD-Net~\citep{TorchMD-Net} as its encoder backbone, consistent with previous pre-training work~\citep{Coord,Frad}. The results are presented in \cref{tab:more-exp} and \cref{tab:more-exp-2}.
\begin{table}[h]
\setlength{\tabcolsep}{4pt}
    \caption{Performance (MAE$\downarrow$) on QM9 dataset. The better result between the two variants of each pretraining method, w/ and w/o MolSpectra, is highlighted in bold.}
    \label{tab:more-exp}
    \begin{center}
    \begin{footnotesize}
    \scalebox{0.96}{
    \begin{tabular}{lccccccc}
    \toprule
    &  \makecell[c]{$\mu$}	&  \makecell[c]{$\alpha$ }		&  \makecell[c]{homo }		& \makecell[c]{lumo}		& \makecell[c]{gap}	& \makecell[c]{$R^2$ }	& \makecell[c]{ZPVE}
     \\
     &  makecell[c]{\scriptsize (D)}	&  \makecell[c]{\scriptsize ($a_0^3$)}		& \makecell[c]{\scriptsize  (meV)}		&  \makecell[c]{\scriptsize (meV)}		&  \makecell[c]{\scriptsize (meV)}	& \makecell[c]{\scriptsize($a_0^2$)}	&  \makecell[c]{\scriptsize (meV)}
     \\
    \midrule
    Coord &	0.016 &	0.052 &	17.7 &	14.7 &	31.8 &	0.450 &	\first{1.71}    \\  
    Coord w/ MolSpectra & \first{0.011} & \first{0.048} & \first{15.5} & 
    \first{13.1} & \first{26.8} & \first{0.410} & \first{1.71} \\
    \midrule
    SliDe &	0.015 &	0.050  & 18.7 &	16.2 &	28.8  &		0.606  &		1.78     \\  
    SliDe w/ MolSpectra & \first{0.012} & \first{0.043} & {\first{17.0}} & {\first{15.8}} & {\first{28.5}} & {\first{0.424}} & {\first{1.73}} \\
    \bottomrule
    \end{tabular}
    }
    \end{footnotesize}
    \end{center}
\end{table}

\begin{table}[h]
\setlength{\tabcolsep}{4pt}
    \caption{Performance (MAE$\downarrow$) on MD17 dataset. The better result between the two variants of each pretraining method, w/ and w/o MolSpectra, is highlighted in bold.}
    \label{tab:more-exp-2}
    \begin{center}
    \begin{footnotesize}
    \scalebox{0.96}{
    \begin{tabular}{lcccccccc}
    \toprule
    & Aspirin & Benzene & Ethanol & \makecell[c]{Malonal\\-dehyde} & \makecell[c]{Naphtha\\-lene} & \makecell[c]{Salicy\\-lic Acid} & Toluene & Uracil \\ 
    \midrule
    Coord & 	0.211  &  	{0.169}  &  	0.096  &  {0.139}  &  	\first{0.053}  &  	0.109  &  	\first{0.058}  &  	\first{0.074}\\ 
    Coord w/ MolSpectra &	\first{0.099} &		\first{0.097} &	\first{0.052} &	\first{0.077} &	{0.085} &	\first{0.093}	& {0.075} &	0.095\\
    \midrule
    SliDe & 	0.174  &  	{0.169}  &  	0.088  &  {0.154}  &  	\first{0.048}  &  	0.101  &  	\first{0.054}  &  	\first{0.083}\\ 
    SliDe w/ MolSpectra &	\first{0.160} &		\first{0.054} &	\first{0.055} &	\first{0.088} &	{0.073} &	\first{0.098}	& {0.077} &	0.097\\
    \bottomrule
    \end{tabular}
    }
    \end{footnotesize}
    \end{center}
\end{table}

Integrating our method with SliDe effectively reduces the error in property prediction on the QM9 dataset and the MD17 dataset. Given that our method enhances both Coord and SliDe, this suggests that our approach is broadly effective across various denoising-based pretraining strategies. Furthermore, incorporating molecular spectra can guide the pre-trained model to acquire knowledge beyond what denoising objectives can offer, which proves beneficial for downstream property prediction.

\section{Visualization of attention patterns and learned spectra representations in SpecFormer}
\begin{figure}[h]
\begin{center}
\includegraphics[width=0.9\linewidth]{./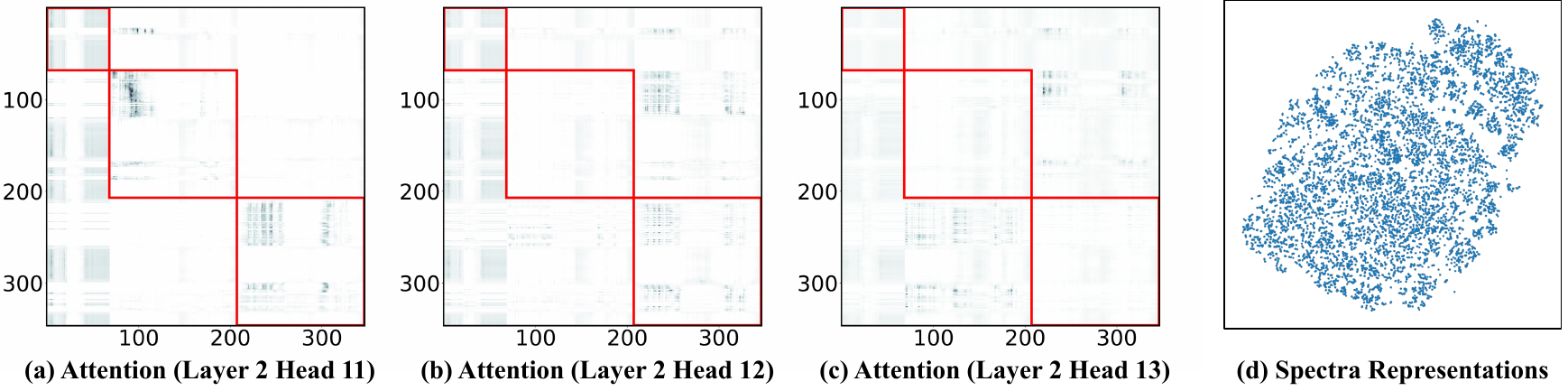}
\end{center}
\caption{(a-c) Attention maps from three attention heads in SpecFormer. Different heads model distinct dependencies. (d) t-SNE visualization of the spectra representations produced by SpecFormer.}
\label{fig:vis}
\end{figure}
We visualize the attention patterns and learned spectra representations in SpecFormer. Based on the visualizations presented in \cref{fig:vis}, we have made the following observations.

In \cref{fig:vis}(a-c), we visualize attention maps from three attention heads in SpecFormer's second layer. 
The attention weights within the three blocks along the main diagonal indicate intra-spectrum dependencies, while those outside reveal inter-spectrum dependencies, as explained in \cref{sec:specformer}.
It can be observed that different attention heads model distinct dependencies: Head 11 focuses on intra-spectrum dependencies, Head 13 focuses on inter-spectrum dependencies, and Head 12 models both types simultaneously. In inter-spectrum dependencies, the interaction between IR spectra and Raman spectra is relatively pronounced, which may be related to their mutual association with vibrational modes. Additionally, because the intensity peaks and dependencies in molecular spectra are sparse, the attention maps in SpecFormer are generally sparse as well.

In \cref{fig:vis}(d), we use t-SNE to visualize the spectra representations produced by the final layer of SpecFormer. It can be observed that the distribution of representations in the latent space is relatively uniform and forms several potential clusters. This well-shaped distribution of representations reveals effective spectra representation learning and supports the structure-spectrum alignment.

\end{document}